\documentclass{article}

% if you need to pass options to natbib, use, e.g.:
%     \PassOptionsToPackage{numbers, compress}{natbib}
% before loading neurips_2023

% ready for submission
\PassOptionsToPackage{numbers}{natbib}
\usepackage[preprint]{neurips_2023}
\usepackage{rotating}
\usepackage{pdflscape}

% to compile a preprint version, e.g., for submission to arXiv, add add the
% [preprint] option:
%     \usepackage[preprint]{neurips_2023}

% to compile a camera-ready version, add the [final] option, e.g.:
%   \usepackage[final]{neurips_2023}

% to avoid loading the natbib package, add option nonatbib:
%    \usepackage[nonatbib]{neurips_2023}

\usepackage[utf8]{inputenc} % allow utf-8 input
\usepackage[T1]{fontenc}    % use 8-bit T1 fonts
\usepackage{hyperref}       % hyperlinks
\usepackage{url}            % simple URL typesetting
\usepackage{booktabs}       % professional-quality tables
\usepackage{amsfonts}       % blackboard math symbols
\usepackage{microtype}      % microtypography
\usepackage{xcolor}         % colors
\usepackage{graphicx} 
\usepackage{subcaption}
\usepackage{amsmath,amsthm,amssymb,amsfonts,amscd,keyval}
\usepackage{mathtools,mathrsfs}
\usepackage{slashed}

\theoremstyle{definition}
\newtheorem{definition}{Definition}[section]

\theoremstyle{plain}
\newtheorem{theorem}[definition]{Theorem}

\theoremstyle{remark}

\title{Transformers are efficient hierarchical chemical graph learners}

% The \author macro works with any number of authors. There are two commands
% used to separate the names and addresses of multiple authors: \And and \AND.
%
% Using \And between authors leaves it to LaTeX to determine where to break the
% lines. Using \AND forces a line break at that point. So, if LaTeX puts 3 of 4
% authors names on the first line, and the last on the second line, try using
% \AND instead of \And before the third author name.

\author{%
	Zihan Pengmei$^\dag$ \\
	Department of Chemistry\\
	The University of Chicago\\
	Chicago, IL 60637 \\
    \And
    Zimu Li$^\dag$ \\
	Yau Mathematical Sciences Center \\
	Tsinghua University \\
	Beijing 100084, China \\
    \AND
    Chih-chan Tien\\
	Department of Computer Science\\
	The University of Chicago\\
	Chicago, IL 60637 \\
    \And
    Risi Kondor\\
    Department of Computer Science\\
	The University of Chicago\\
	Chicago, IL 60637 \\
    \And
    Aaron R. Dinner\thanks{Correspondence author. Email: dinner@uchicago.edu}\\
    Department of Chemistry\\
	The University of Chicago\\
	Chicago, IL 60637 \\
    }

\begin{document}
	
	\maketitle
	
\begin{abstract}
Transformers, adapted from natural language processing, are emerging as a leading approach for graph representation learning.  
% Current graph transformers generally treat each node or edge as an individual token, which can become computationally expensive for graphs of even moderate size owing to the quadratic scaling with token count of the computational complexity of self-attention.
Contemporary graph transformers often treat nodes or edges as separate tokens. This approach leads to computational challenges for even moderately-sized graphs due to the quadratic scaling of self-attention complexity with token count. In this paper, we introduce SubFormer, a graph transformer that operates on subgraphs that aggregate information by a message-passing mechanism. This approach reduces the number of tokens and enhances learning long-range interactions. We demonstrate SubFormer on benchmarks for predicting molecular properties from chemical structures and show that it is competitive with state-of-the-art graph transformers at a fraction of the computational cost, with training times on the order of minutes on a consumer-grade graphics card. We interpret the attention weights in terms of chemical structures.  We show that SubFormer exhibits limited over-smoothing and avoids over-squashing, which is prevalent in traditional graph neural networks. 
%Notably, our approach achieves these results in significantly reduced training times, ranging from mere minutes to half hour on a consumer-grade graphics card, underscoring its efficiency and promise.
\end{abstract}

\section{Introduction}\label{sec:introduction}

Many systems ranging from social networks to molecular structures involve interactions between discrete elements and thus can be described by graphs. It remains challenging to identify the features of these systems that best enable learning their properties from data. Manually choosing features is subjective, and the computational cost of kernel-based methods scales cubically with the size of the dataset; kernel methods are also hard to parallelize, in contrast to neural network methods \cite{williams2006gaussian}. 
%Featurizing large chemical molecules is challenging and can be computationally expensive using traditional kernel methods, especially when dealing with large datasets.
Graph neural networks (GNNs), in which graph structures directly define model structures that are inherently sparse and, in turn, efficient, have emerged as attractive alternatives \cite{gilmer2017neural}. 
%Here we focus on the case of predicting molecular properties from molecular structures. In chemical systems, molecules are typically represented as graphs (G) with atoms as nodes and bonds as edges.  
%However, GNNs offer an efficient solution by aggregating information from the nearest neighbors of any given node, making them particularly appealing for their ease of featurization and computational efficiency \cite{gilmer2017neural}. 
However, GNNs are not the only neural-network architectures that show promise for graph-structured data learning. Transformers \cite{Attention2017} also appear to be able to learn graphs just as they are able to learn the semantic structures of sentences by modulating the weights of a fully-connected architecture \cite{GraphTransformer2020, choromanski2020rethinking, Graphormer2021, SAN2021, GraphGPS2022}. The stark contrast between these two architectures with respect to structural inductive bias calls for better understanding of how they encode graph-structured data and how they can be combined to advantage.

In the present paper, we consider the case of learning molecular properties from chemical structures. A challenge that arises in this case is that nodes (atoms or functional groups) separated by many edges (bonds) can interact owing to the delocalized nature of electronic structure and the arrangement of the atoms in space. Prevailing graph learning methods are based on message-passing (MP) neural networks (NNs) \cite{gilmer2017neural}.  In each layer of an MPNN, nodes aggregate information from their nearest neighbors. Capturing long-range interactions requires repeated exchanges between neighbors and thus many MP layers. MPNNs with insufficient numbers of layers exhibit \emph{under-reaching}, in which nodes remain oblivious to information associated with nodes beyond a certain number of hops \cite{GNNBottleneck2020,GNNCurvature2021}. While this problem can be solved by stacking layers, as in relatively simple MPNNs like GCN \cite{kipf2016semi} and GAT \cite{velivckovic2017graph}, this introduces two other issues: \emph{over-squashing} and \emph{over-smoothing} \cite{GNNBottleneck2020,GNNCurvature2021,GNNTopology2023,GNNOversmooth2023}. The former refers to the insensitivity of a feature vector at a node to variations in feature vectors at distant nodes due to excessive compression of information by repeated MP. Over-smoothing refers to feature vectors across nodes becoming increasingly uniform after numerous MP iterations. 

Graph transformers, with their ability to allow tokens of atom features to interact directly through a fully-connected structure, are posited to address the over-squashing issue. However, they are not without their own set of challenges. These include a computational cost that scales quadratically with the number of nodes \cite{Attention2017} and persistent concerns about over-smoothing \cite{AttentionLosesRank2021,TransOversmooth2022}.  Furthermore, the performance of graph representation learning models is generally assessed in terms of their ability to separate non-isomorphic graphs \cite{MPNN-WL2018,GNN-WL2020}, and, like most efficient models \cite{IGN2018,F-IGN2019}, pure graph transformers cannot separate graphs that are indistinguishable under the 1-Weisfeiler-Lehman test (1-WL) \cite{TokenGT2022}, an algorithm based on color refinement of graph nodes \cite{WL1968}. 

In this work, we introduce the Subgraph Transformer (SubFormer), a novel molecular graph learning architecture that combines the strengths of MPNNs and transformers to address the challenges highlighted earlier. Central to our approach is the decomposition of molecular graphs into coarse-grained representations using hierarchical clustering methods. We adopt the concept of junction trees, as proposed in \cite{JunctionTree2018}, which can be likened to a ``molecular backbone'', where each node represents a subgraph or cluster derived from the original graph. SubFormer operates in two distinct phases. Initially, the graph-level features of the molecule are locally aggregated  using a shallow MPNN. This approach avoids both the over-squashing and over-smoothing problems associated with deeper MPNNs. Following this, the resulting coarse-grained features, representing substructures within the molecule, are passed through a standard transformer. The transformer's direct interactions between tokens eliminate the need for numerous iterations to access long-range interactions, effectively addressing both the under-reaching problem and the over-squashing problem. By clustering nodes within the graph, we achieve a computational cost reduction for the transformer, proportional to a power of the average node count per cluster. Notably, our framework offers an expressive power that surpasses the 1-WL test. 

This paper is organized as follows. Section \ref{sec:background} provides background on MPNNs and transformers, and we discuss pertinent models, such as the graph transformer\cite{Attention2017,GraphTransformer2020,park2022grpe,TokenGT2022} and the junction tree variational autoencoder \cite{JunctionTree2018}. In Section \ref{sec:Results}, we present the update rule of the Subgraph Transformer (SubFormer) and then experimental results; we show that SubFormer performs on par with leading graph transformers on  standard graph benchmarks, including the ZINC \cite{irwin2012zinc}, long-range graph benchmarks \cite{dwivedi2022long}, and MoleculeNet datasets \cite{wu2018moleculenet}.  We summarize in Section \ref{sec:Discussion}.  Theoretical analysis for the expressive power of SubFormer, further details and analysis of the benchmarks, and timing information are provided in the Appendices.

\section{Background}\label{sec:background}

\textbf{Message passing graph neural networks.} 
A graph $G$ consists of a collection $V(G)$ of $n$ nodes and a collection $E(G)$ of edges connecting selected pairs of nodes. It is conventional to express the graph by its adjacency matrix $A$ with elements $a_{ij}$, where $a_{ij} = 1$ if nodes $i$ and $j$ are connected, and $a_{ij} = 0$, otherwise. Let $D$ denote the diagonal matrix where $d_{ii}$ is the number of nodes connected to $i$. Then the normalized graph Laplacian is defined as $L = I - \sqrt{D^{-1}} A \sqrt{D^{-1}}$ \cite{GraphTransformer2020,TokenGT2022}. We take its eigenvectors as the positional encoding of the transformer that we introduce later. Let $\{x_i\}_{i \in V(G)}, \{y_{ij}\}_{(i,j) \in E(G)}$ be initial node and edge features. Then a MPNN \cite{gilmer2017neural} updates features according to the following recursion formulas:
\begin{align}
	x^{(0)}_i = x_i; \quad
	x^{(l+1)}_i = \Psi \Big( x^{(l)}_i, \text{AGG}_{j \in N(i)} \Phi (x^{(l)}_i, x^{(l)}_j, y_{ij} ) \Big),
\end{align}
where AGG is a function that aggregates node and edge features in the neighborhood $N(i)$ of node $i$ by summation and $\Psi$ and $\Phi$ are trainable functions implemented as neural-network layers.

\noindent \textbf{Graph transformers.} 
A transformer layer \cite{Attention2017} is mathematically represented by a parameterized function $f^{(l)}: \mathbb{R}^{m \times d} \rightarrow \mathbb{R}^{m \times d}$ with $m$ tokens $z_i \in \mathbb{R}^{d}$, where $d$ is the dimension of the feature vector $x_i$. The tokens $z_i$ are updated through multi-head self-attention (MHSA) at the $l$-th layer:
\begin{align}
	a_{h,ij}^{(l)} = \text{softmax} \Big( \frac{ \langle Q_h^{(l)}(z_i^{(l)}), K_h^{(l)}(z_j^{(l)}) \rangle }{\sqrt{k}} \Big); \quad 
	u_i^{(l)} = \sum_{h = 1}^H W_h^{(l)} \sum_{j = 1}^n a_{h,ij}^{(l)} V_h^{(l)} (z_j^{(l)}),
\end{align}
where $Q_h^{(l)}$, $K_h^{(l)}$, and $V_h^{(l)} \in \mathbb{R}^{k \times d}$ are the query, key and value matrices, and $W_h^{(l)} \in \mathbb{R}^{d \times k}$ is a weight matrix per head. The head number is $H = n/k$, and the softmax function is used to produce a sequence of discrete probability distributions known as attention weights, $a_{h,ij}$. The output of the MHSA is normalized  \cite{LayerNorm2016,LayerNorm2019} and passed through a  fully connected layer, the output of which is again normalized:
\begin{align}
	w_i^{(l)} = \text{LayerNorm}( z_i^{(l)} + u_i^{(l)}); \quad
	z_i^{(l+1)} = \text{LayerNorm}( w_i^{(l)} +  W_2 \sigma(W_1 w_i^{(l)}) ),
\end{align}
where $\sigma$ is a non-linear activation function such as ReLU. 

Graph transformers \cite{GraphTransformer2020,SAN2021,Graphormer2021,GraphGPS2022,TokenGT2022,CapacityOfAttention2023} adapt the transformer architecture to studies of graphs by incorporating positional or structural encoding as a soft inductive bias. For instance, one could either add or concatenate the Laplacian eigenvectors to the initial tokens \cite{GraphTransformer2020,SAN2021,EquivariantPE2022,TokenGT2022}, or one could use the shortest path between nodes $i$ and $j$ to bias the self-attention weights $a_{h,ij}$ \cite{Graphormer2021,TransExpressive2023}. In this paper, we add the eigenvectors of the graph Laplacian and the shortest path matrix to tokens without making any further modifications to the MHSA in a standard transformer architecture.

\noindent \textbf{Junction tree variational autoencoders.}  
The framework that we describe below employs both GNN and transformer architectures to study local structure of the original graph as well as long-range interactions through a hierarchically coarse-grained graph. We decompose molecular graphs based on chemically meaningful local structures like rings and bonds following the strategy in \cite{JunctionTree2018}. To be specific, given a molecular graph $G$, its junction tree is crafted as follows. $(1)$ We search all rings and bonds (edges) which are not contained in a ring from the graph and record them as clusters $C_i$. $(2)$ If an atom belongs to more than two clusters, we remove it from the existing clusters and make a new cluster containing only that atom. $(3)$ We draw a new graph $G'$ with the $C_i$ as nodes. Two clusters $C_i$ and $C_j$ are joined by an edge if their intersection is nonzero in $G$. $(4)$ We take a maximal spanning tree $T$ of $G'$. Some examples are presented in Figs.\ \ref{fig:main}, \ref{fig:attn_mols}, \ref{fig:attn_mol_org_2}, and \ref{fig:attn_mol_2}.

\begin{figure}
	\centering
	\includegraphics[width=0.8\textwidth]{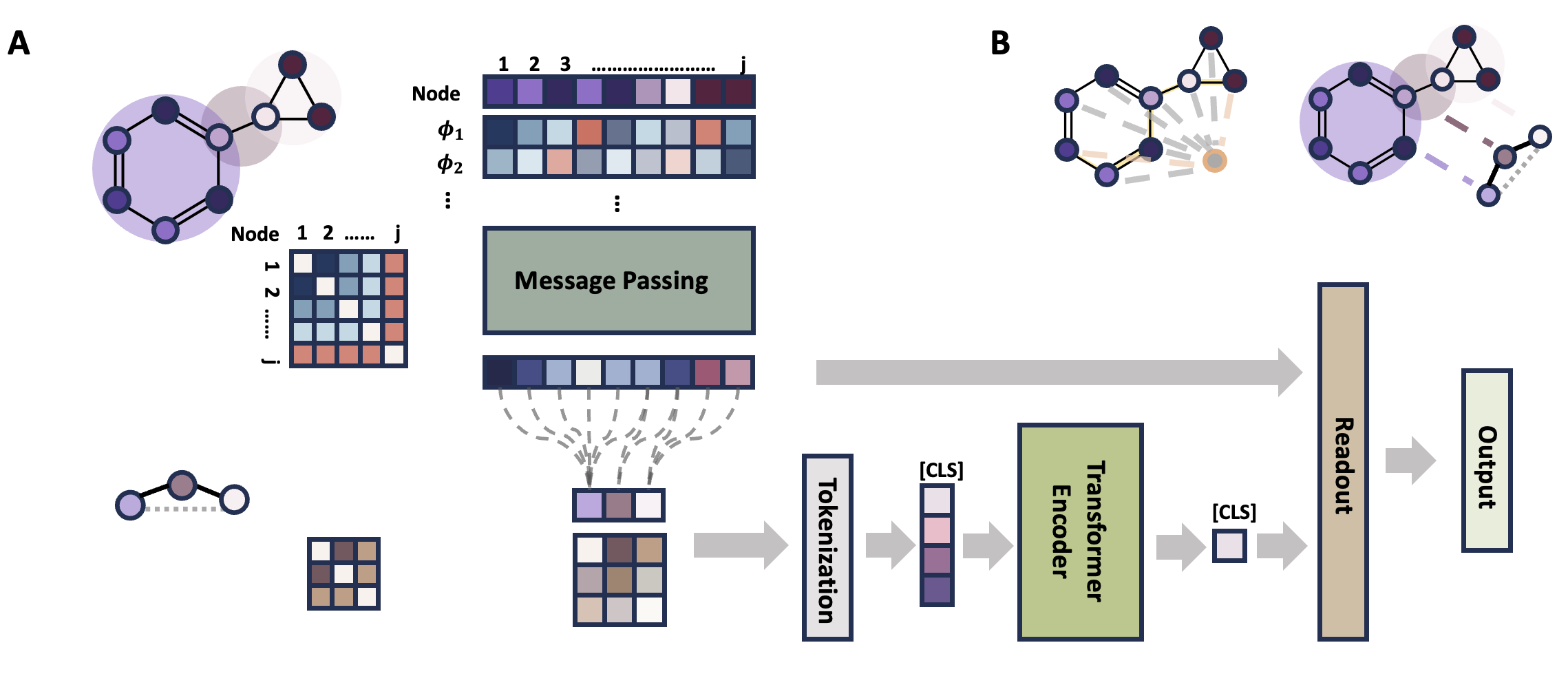}
	\caption{Architectures.  (A) Example molecule,  its graph Laplacian, and the SubFormer architecture with dual readout from hierarchical graphs. $\phi_i$ denotes the eigenvectors of the graph Laplacian or shortest path matrix. [CLS] refers to an classification token used in transformer encoders for information aggregation. (B) Illustration of the virtual node trick (left) and graph coarse-graining (right). The virtual node decreases the shortest path between any two nodes to two hops at most.}
	\label{fig:main}
\end{figure}

\section{Results}\label{sec:Results}

\noindent \textbf{SubFormer approach.}
To deal with the aforementioned issues, we propose Subgraph Transformer (SubFormer).  The essential idea is that the transformer operates on hierarchically coarse-grained graphs. SubFormer consists of two parts. The first is characterized by the subgraph-to-token routine, which employs a few MP layers to gather local information and compress it into a coarse-grained graph using a graph decomposition scheme. We follow \cite{HIMP2020} and use the junction tree autoencoder \cite{JunctionTree2018} mentioned above. To be specific, let $X \in \mathbb{R}^{n \times d}$ denote the feature matrix of the molecular graph $G$ and let $Z \in \mathbb{R}^{m \times d}$ denote the feature matrix of the junction tree $T$. We compress the local information of $G$ into $T$ using MP and a fully-connected layer as follows:
\begin{align}
	\begin{aligned}
		X^{(l)'} &= \text{MPNN} (X^{(l)});\\
		X^{(l+1)} &= X^{(l)'} + \theta_1 \sigma( S^T Z^{(l)} W_1^{(l)}); \\
		Z^{(l+1)} &= Z^{(l)} + \theta_2 \sigma( S X^{(l+1)} W_2^{(l)}),
	\end{aligned}
\end{align}
where $\theta_i$ and $W_i$ are trainable weights, and we denote by $S \in \{0,1\}^{\vert V(T) \vert \times \vert V(G) \vert} = \{0,1\}^{m \times n}$ the matrix that assigns nodes from $G$ to clusters and thus nodes of $T$.

We then add together node features of the output $X_{\text{out}}$ to obtain $x_{\text{out}}$. We tokenize $Z_{\text{out}}$ composed of vectors $z_i$ representing node features by applying a weight matrix $W$ and adding the positional encoding $b^{\text{PE}}_i$, which can be either the eigenvectors of the graph Laplacian or the shortest path matrix of the coarse tree:
\begin{align}
	z^{(0)}_i = W z_i + b^{\text{PE}}_i \quad \text{for}\quad i = 1,...,n.
\end{align}
It is also a common option to attach a classification token $z_0^{(0)}$ of learnable parameters \cite{ClassToken2020,ClassToken2021} ([CLS] in Fig.\ \ref{fig:main}). After training a standard transformer of $L$ layers, we read out the class token concatenated with $x_{\text{out}}$ optionally as 
\begin{align}
	z_{\text{out}} = [z_0^{(L)}, x_{\text{out}}].
\end{align}

% \begin{figure}
	% 	\centering
	% 	\includegraphics[width=1\textwidth]{main.png}
	% 	\caption{The proposed SubFormer architecture with dual readout from hierarchical graphs. $\phi$ denotes the eigenvector components of the graph Laplacian or shortest path distance matrix. [CLS] token refers to the classification token used in Transformer encoders for information aggregation.}
	% 	\label{fig:enter-label}
	% \end{figure}

The advantages of our SubFormer architecture are discussed in conjunction with our experimental results below. We show that SubFormer is more powerful than the $1$-WL algorithm for the graph isomorphism problem \cite{WL1968,Morgan1965,Kiefer2020} in Appendix \ref{sec:WL}.

\noindent \textbf{Predicting molecular properties.}
We first learn and predict octanol-water partition coefficients using the ZINC dataset \cite{irwin2012zinc}. The graphs are defined by the non-hydrogen atoms (nodes) and their bonds (edges); the node features are the atom types, and the edge features are the bond types. Our results are competitive with those of a state-of-the-art graph transformer, GraphGPS, and surpass those of other graph transformers (Table \ref{tab:zinc}). We also obtain results comparable to GraphGPS (as well as GT and SAN) for the Peptides-struct benchmark, a long-range graph benchmark \cite{dwivedi2022long} (Table \ref{tab:pep-func}).
\begin{table}[ht]
	\centering
	\begin{minipage}{.5\linewidth}
		\centering
		\caption{Results for the ZINC dataset. Lower values are better. Scores for published architectures are taken from \cite{GraphGPS2022,thiede2021autobahn,HIMP2020,Graphormer2021,kreuzer2021rethinking}. }
		\label{tab:zinc}
		\begin{tabular}{@{}cc@{}}
			\toprule
			\textbf{Model} & \textbf{MAE} \\ \midrule
			GCN            & 0.367±0.011                     \\
			GAT            & 0.384±0.007                     \\
			GIN            & 0.408±0.008                     \\
			HIMP           & 0.151±0.006                     \\
			AUTOBAHN       & 0.106±0.004                     \\
			GT             & 0.226±0.014                     \\
			SAN            & 0.139±0.006                     \\
			Graphormer     & 0.122±0.006                     \\
			GraphGPS            &\textbf{0.070±0.004}             \\
			\midrule
			Ours(580k)           &0.094±0.003  \\
			Ours(dual readout,slim,200k)           &0.084±0.004             \\
			Ours(dual readout,VN, 567k) &0.078±0.003 \\
			Ours(dual readout,567k)           &\textbf{0.077±0.003}             \\ 
			\bottomrule
		\end{tabular}
	\end{minipage}%
	\begin{minipage}{.5\linewidth}
		\centering
		\caption{Results for the Peptides-struct dataset of long-range graph benchmarks. Lower values are better. Scores for published architectures are taken from\cite{GraphGPS2022}. }
		\label{tab:pep-func}
		\begin{tabular}{@{}cc@{}}
			\toprule
			\textbf{Model} & \textbf{MAE} \\ \midrule
			GCN            & 0.3496±0.0013                     \\
			GIN            & 0.3547±0.0045                     \\
			GT             & 0.2529±0.0016                     \\
			SAN            & 0.2545±0.0012                     \\
			GraphGPS            &\textbf{0.2500±0.0005}             \\
			\midrule
			Ours(567k)           &\textbf{0.2487±0.0009}             \\ 
			\bottomrule
		\end{tabular}
	\end{minipage}
\end{table}

\begin{table}[ht]\centering
	\caption{Results for the MoleculeNet datasets. Higher values are better. $^*$Because published GraphGPS results were not available for these datasets, we conducted our own experiments; because GraphGPS overfits the MUV dataset, we report the best value in parentheses as well. Scores for published architectures are taken from \cite{rampavsek2022recipe,fang2022geometry,wu2018moleculenet,HIMP2020,thiede2021autobahn}.}
	\label{tab:my-table}
	\begin{tabular}{@{}ccccc@{}}
		\toprule
		Dataset             & TOX21                      & TOXCAST              & MUV                                              & MOLHIV            \\ \midrule
		Num. Task           & 12                         & 617                  & 17                                               & 1              \\
		Metic               & ROC-AUC & ROC-AUC              & AP                           & ROC-AUC        \\ \midrule
		Random Forest       & 0.769±0.015                & N/A                  & N/A                                              & 0.781±0.006    \\
		XGBoost             & 0.794±0.014                      & 0.640±0.005                & 0.086±0.033                                             & 0.756±0.000           \\
		Kernel SVM          & 0.822±0.006                & 0.669±0.014                & 0.137±0.033        & \textbf{0.792±0.000} \\
		Logistic Regression & 0.794±0.015                      & 0.605 ± 0.003                & 0.070±0.009                                             & 0.702±0.018          \\
		GCN                 & 0.840±0.004                & 0.735±0.002          & 0.114±0.029 & 0.761±0.010    \\
		% GAT                 & 0.769±0.022                & 0.687±0.017          & N/A                                              & N/A            \\
		GIN                 & 0.850±0.009                & 0.741±0.004          & 0.091±0.033                                      & 0.756±0.014    \\
		HIMP                & \textbf{0.874±0.005}                & 0.721±0.004          & 0.114±0.041                                      & 0.788±0.080    \\
		AUTOBAHN            & N/A                        & N/A                  & 0.119±0.005                                      & 0.780±0.015    \\
		ChemRL-GEM          & 0.849±0.003                & 0.742±0.004          & N/A                                              & N/A            \\
		GraphGPS            & 0.849*                      & 0.719*                & 0.087(0.133)*                                     & 0.788±0.010   \\ \midrule
		Ours                &0.841±0.003       & 0.733±0.001 & 0.143±0.026                             & \textbf{0.795±0.008}    \\
		Ours(dual readout,VN) &0.844±0.006&\textbf{0.744±0.010}&\textbf{0.160±0.014}&0.781±0.010 \\
		Ours(dual readout)   &\textbf{0.851±0.008} &\textbf{0.752±0.003} & \textbf{0.182±0.019} & 0.756±0.007\\
		\bottomrule
	\end{tabular}
\end{table}
The MoleculeNet datasets target predictions of molecular properties such as toxicity, solubility, and bioactivity \cite{wu2018moleculenet}. We split the dataset and evaluate performance following \cite{wu2018moleculenet}. %The datasets span various challenges, from binary to multi-task classifications. 
SubFormer exhibits strong performance across these diverse tasks. Notably, a dual readout approach \cite{HIMP2020}, which integrates information from both the original graph and the coarse-grained tree, improved prediction accuracy in most (but not all) cases (Tables \ref{tab:zinc} and \ref{tab:my-table}).

% \end{itemize}

% % \subsection*{Molecular graph embedding}

% % \begin{itemize}
% %     \item a figure using t-SNE to analyze the atom embedding
% %     \item a figure using t-SNE to analyze the clique embedding
% \end{itemize}

\noindent \textbf{Long-range interactions of molecular clusters.}
The SubFormer architecture is designed to capture molecular properties that depend on both local and non-local interactions among atoms in a molecule. A widely adopted strategy to capture long-range interactions is to add a virtual node connected to all original nodes in the graph \cite{gilmer2017neural,hu2020open}. In the example in Figure \ref{fig:main}B, five MP layers are needed to transmit information from the rightmost node to the leftmost node in the absence of the virtual node; the introduction of the virtual node acts as a shortcut, effectively reducing the distance between any pair of nodes to two hops at most. SubFormer obviates this strategy. First, coarsening the graph into a junction tree reduces the number of hops between nodes, especially for molecules that contain multiple rings. Then the self-attention mechanism allows tokens of atom clusters to interact even when they are not directly connected in the coarse-grained tree. As indicated in Tables \ref{tab:zinc} and \ref{tab:my-table}, adding a virtual node (VN) to the graph does not benefit the SubFormer approach.

% \begin{figure}[ht]
% \centering
% \includegraphics[width=0.4\textwidth]{illu.png}
% \caption{Illustration of the virtual node trick (left) and graph coarse graining(right). The auxillary virtual node decrease the highlighted distance between two nodes to 2-hops.}
% \label{fig:vn_illu}
% \end{figure}

\noindent \textbf{SubFormer attention weights correspond to chemically meaningful fragments.}
To gain insight into how SubFormer functions, we plot the coarse-grained attention weights for representative molecules of the ZINC dataset in Figs.\ \ref{fig:attn_mols} and \ref{fig:attn_mol_2}; the colored coarse-grained tree and molecular graphs show the attention weights learned by the classification token from the last layer. There is a clear correspondence between the attention weights and the chemical structures.  In particular, we see that aromatic rings generally have high attention weights.  The sparsity of high attention weights for the classification token indicates the model's ability to focus on specific molecular fragments.

To ensure these results were not specific to the ZINC benchmark, we also trained SubFormer to predict the energy gap of frontier orbitals generated by density functional theory using the organic donor-acceptor molecules dataset of the computational materials repository \cite{landis2012computational}. In this dataset, the donors are typically conjugated and aromatic systems, while the acceptors are typically highly electronegative functional groups, such as ones containing fluorine. SubFormer achieves chemical accuracy of 0.07 eV on the hold-out test set, which is less than the typical error of the density functional theory calculations \cite{mardirossian2017thirty}.  In this case, the SubFormer attention weights often correspond to molecular fragments participating in charge-transfer, as illustrated in Fig.\ \ref{fig:attn_mols}. An additional example of SubFormer capturing long-range charge-transfer in a large molecular system is depicted in Fig. \ref{fig:attn_mol_org_2}. Such long-range interactions are very challenging for conventional MPNNs to capture.

In Figs.\ \ref{fig:attn_mols}, \ref{fig:attn_mol_2} and \ref{fig:attn_mol_org_2}, we also present attention maps averaged across the attention heads. While the self-attention mechanism centers on particular nodes, the averaged attention maps become progressively flatter (note the varying scales of the heatmaps). This observation prompted us to further probe SubFormer's over-smoothing behavior.

\begin{figure}
\centering
\begin{subfigure}[b]{\textwidth}
    \centering
	\includegraphics[width=0.8\linewidth]{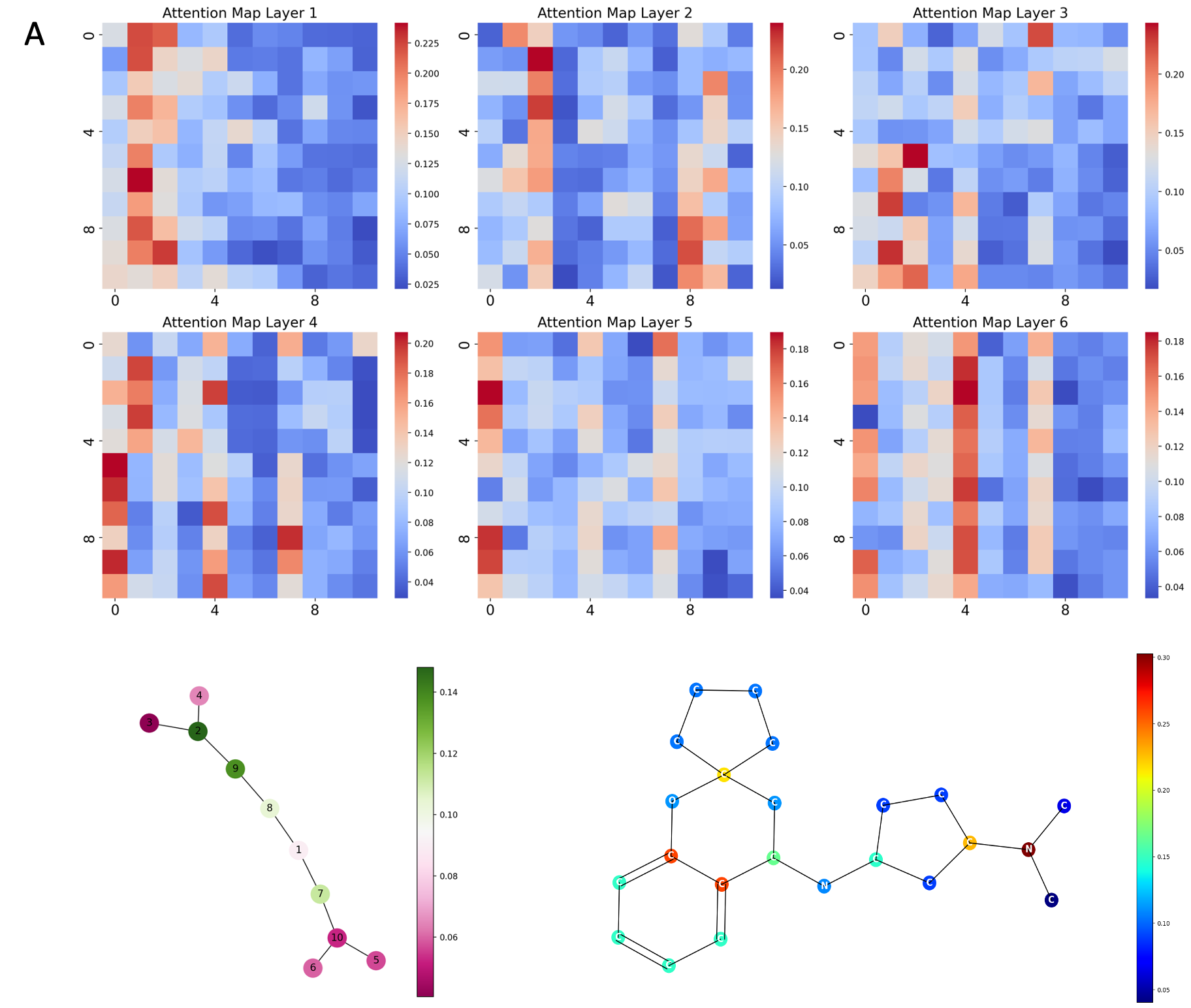}

	\label{fig:attn_mol_1} 
\end{subfigure}

\vspace{1em} % Adds some vertical spacing between subfigures

\begin{subfigure}[b]{\textwidth}
    \centering

	\includegraphics[width=0.8\linewidth]{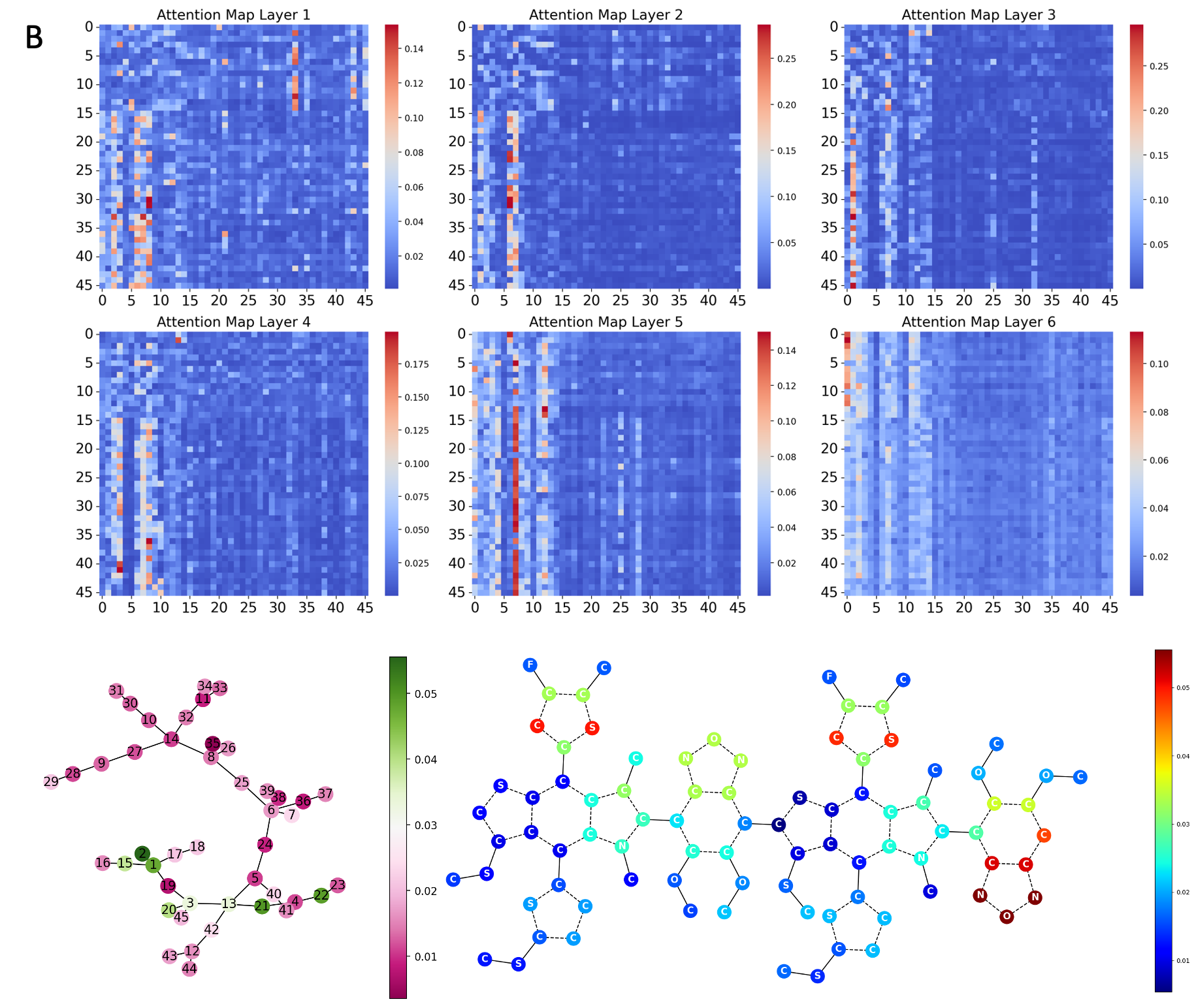}

	\label{fig:attn_mol_org_1} 

\end{subfigure}

\caption{Attention maps (heat maps) for representative molecules from (A) the ZINC dataset (B) the organic donor-acceptor dataset (for clarity, aromatic systems are labeled with dashed lines).  Node 0 is the classification token. The molecular graphs and trees with corresponding scale bars show the attention weights learned by the classification token.}
\label{fig:attn_mols}
\end{figure}

\noindent \textbf{Over-smoothing and over-squashing in SubFormer: Analysis of the ZINC dataset.}
Graph neural networks, while powerful, can exhibit over-smoothing, especially when deeper architectures are employed. Here, we explore the behavior of SubFormer in this regard.  We specifically used SubFormer(Slim) with the dual output mechanism trained on the ZINC dataset (Table \ref{tab:zinc}). 
% In Fig.\ \ref{zinc_num_enc.png}, we show how performance varies as we decrease the feature dimension and add encoder layers so as to keep XXX constant. 
With fixed feature dimension, we gradually increase the depth of SubFormer(slim) by stacking additional encoder layers. Usually, a performance increase would be expected as more parameters are introduced. However, as illustrated by Fig. \ref{fig:zinc_num_enc}, the accuracy clearly diminishes beyond three transformer encoder layers, which we attribute to over-smoothing based on the analysis above. To verify that this is indeed the case, we use the Dirichlet energy: 
\begin{align}\label{eq:Dirichlet}
	\mathcal{E}(X^{(l)}) = \frac{1}{n} \sum_{i \in V(G)} \sum_{j \in N(i)} \Vert X_i^{(l)} - X_j^{(l)} \Vert^2_2,
\end{align} 
where \(X^{(l)}\) represents the input to the \(l\)-th layer. Because $\mathcal{E}(X^{(l)}) \to 0$ if and only if the node features are uniform \cite{GNNOversmooth2023}, this metric can be used to quantify over-smoothing.

\begin{figure}
    \centering
    \includegraphics[width=1.0\textwidth]{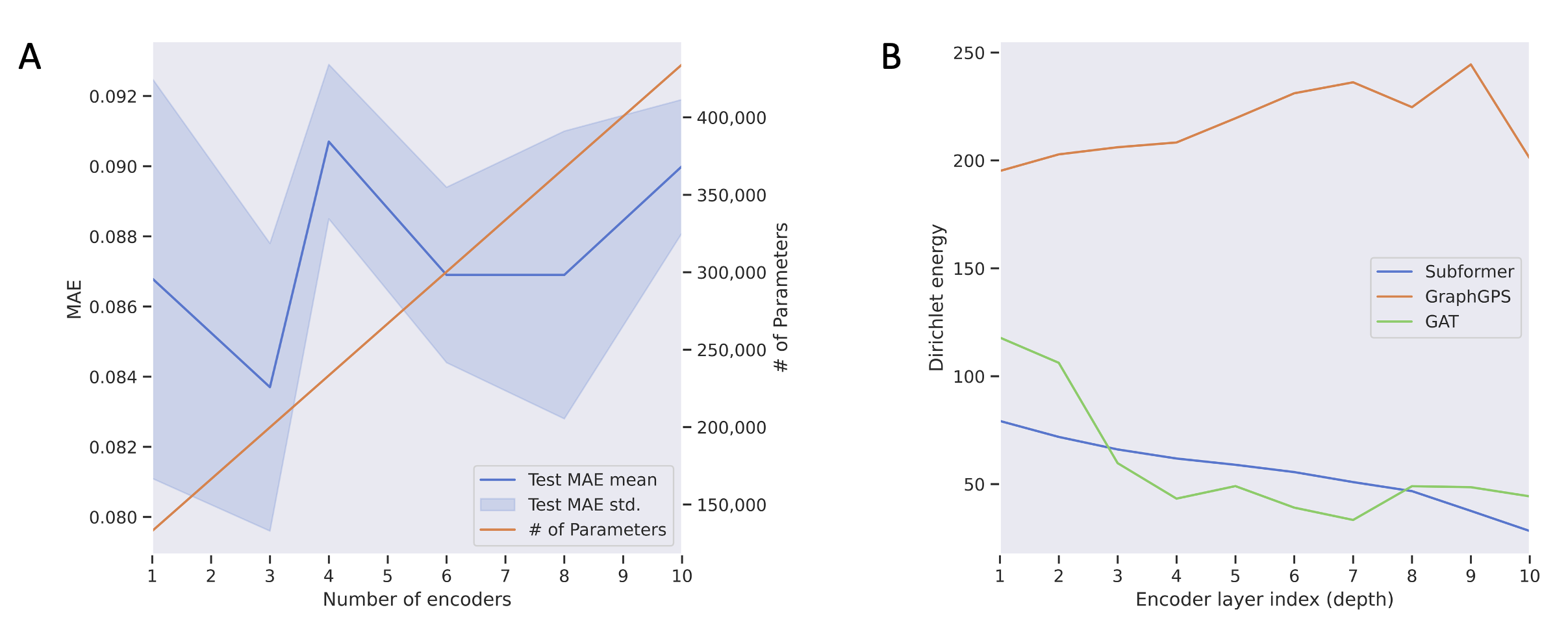}
    \caption{Increasing the number of layers leads to over-smoothing.  (A) Dependence of performance on the number of transformer encoder layers for SubFormer(slim) applied to the ZINC dataset. (B) Comparison of the Dirichlet energy at different encoder layers among SubFormer, GraphGPS, and GAT applied to the ZINC dataset.}
        \label{fig:zinc_num_enc}

\end{figure}

We see in Fig. \ref{fig:zinc_num_enc} that the Dirichlet energy decreases for SubFormer as the layer index increases, although the change is much more gradual than for GAT. By contrast, GraphGPS maintains a consistently high Dirichlet energy, indicating that it does not suffer from over-smoothing.  GraphGPS achieves this behavior by interleaving MP and self-attention. %In NLP, Transformers propagate information across a fully-connected graph of tokens, while message-passing networks focus on nearest neighbors. 
In effect, GraphGPS uses two distinct adjacency matrices: its MP component utilizes the graph adjacency matrix to aggregate information from nearest neighbors, while its self-attention component exchanges information in a fully-connected fashion. This interplay suppresses over-smoothing. However, interleaving MP and self-attention is difficult to justify mathematically
%relative to simpler architectures like GRPE \cite{park2022grpe} and TokenGT \cite{TokenGT2022}. This interpretative challenge is further explored in recent studies 
and complicates interpretation \cite{akhondzadeh2023probing,GraphGPS2022}.  %SubFormer approach retains the rank of attention maps very well and retains rich feature representations in each encoder layer as indicated in Fig.\ref{fig:attn_mol_org_1} (and \ref{fig:attn_mol_org2} in the Appendix), on a more chemically well-defined and meaningful charge-transfer molecular dataset compared with ZINC. 

%While the nuances of over-smoothing pose significant challenges, another pivotal aspect demanding scrutiny in graph neural networks is over-squashing. This phenomenon is particularly vital in the context of molecular graphs where the relationship and interactions between distant nodes (or atoms) play a critical role. To delve into this, let us consider two atoms, denoted as \(i\) and \(j\), situated distantly within a molecular graph. The MHSA mechanism, as previously discussed, facilitates a direct exchange of feature vectors between these atoms. This contrasts with the traditional MPNN layers which would necessitate multiple updates to achieve similar interactions. 

As noted previously, over-squashing is another potential issue which arises in MPNNs applied to graphs with long-range interactions.  To assess the significance of this issue for chemical applications, we plot the distribution of shortest path lengths to a reference node (the starting node selected by the force-directed graph drawing algorithm as implemented in the NetworkX package \cite{SciPyProceedings_11}) for several datasets in Fig.\ \ref{fig:problem radii}.  We see that paths with 15 or more hops frequently arise, necessitating a comparable number of MP layers, which should lead to over-squashing.

We quantify over-squashing by computing the Jacobian \(\partial x_i^{(l)}/\partial x_j\), where \(x_j = x_j^{(0)}\) is the initial feature of a distant node \(j\) relative to \(i\). Tending to zero means the input of node \(j\) loses its influence on node \(i\), and we can attribute values close to zero to over-squashing when under-reaching is not an issue due to the number of MP layers \cite{GNNCurvature2021,GNNTopology2023}.

To illustrate this phenomenon, we picked a subset of molecules from the ZINC dataset and computed the Jacobian for GAT and SubFormer.  For the latter, we did not read the classification token to be consistent with GAT. We furthermore did not use the dual readout to prevent the model from accessing the original graph information directly. Instead, we mapped the coarse-grained feature matrices back to the original graph. Then, we computed the Jacobian of the feature vector of the reference nodes used above with respect to all input features and took the norm. We show representative results in the Fig.\ \ref{fig:grad_str}. We color the nodes with a gradient norm less than 0.05 dark gray. As described in section \ref{sec:background}, GAT exhibits over-squashing in larger graphs with branches, while SubFormer does not.

\begin{figure}
\centering
\includegraphics[width=0.8\textwidth]{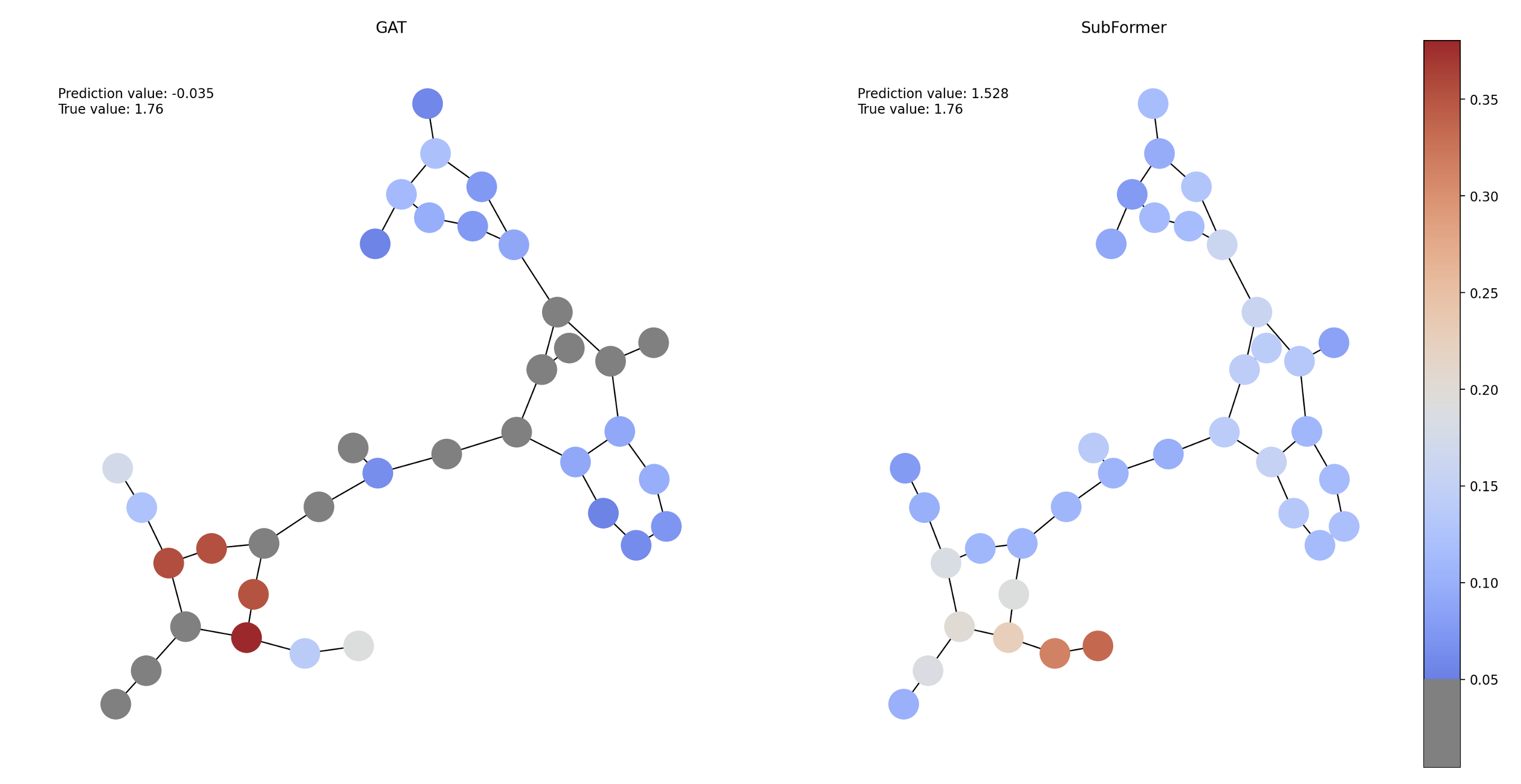}
\caption{Norm of the Jacobian with respect to the input features. Nodes with gradient norm less than 0.05 are highlighted dark gray.}
\label{fig:grad_str}
\end{figure}

\section{Conclusions}\label{sec:Discussion}

Efficient graph learning paradigms have been a focal point for both the machine learning and chemistry communities. In this study, we introduced SubFormer, which combines hierarchical clustering of nodes by tree decomposition and MP with a transformer architecture for learning both local and global information in graphs. Our approach is motivated by the longstanding practice of interpreting chemical structures in terms of functional fragments.  Here we showed that decomposition of graphs into fragments is not just useful for interpretation but also addresses graph representation learning challenges and reduces computational cost. 
SubFormer performed consistently well on standard benchmarks for predicting molecular properties from chemical structures. Visualization of the attention maps suggests that the model learns chemically pertinent features. We showed that SubFormer does not suffer from over-squashing because the attention weights obviate extensive MP to capture long-range interactions. Over-smoothing, while better controlled than in some graph transformers, remains an issue, demanding further study.
Nevertheless, our results demonstrate that standard transformers, when equipped with local information aggregated via message-passing, excel as hierarchical graph learners.

%\bibliographystyle{plainnat}
%\bibliography{sample}

\newpage
\appendix

\section{Weisfeiler-Lehman Test and Expressive Power of SubFormer} \label{sec:WL} 
A central goal of graph learning models is to distinguish non-isomorphic graphs. Two graphs are isomorphic if their nodes and edges are in one-to-one correspondence. Despite this simple description, in the language of complexity theory, the so-called graph isomorphism problem is neither known to be NP-complete nor known to be polynomial-time solvable \cite{Karp1972}. Weisfeiler-Lehman algorithm provides a weaker but possibly efficient approach to separate part of graphs. Let $\chi: V(G) \rightarrow \mathbb{N}$ be a function that colors each node of the graph $G$ with a natural number. Then the 1-WL algorithm refines color as follows:
\begin{align}\label{eq:WL}
\begin{aligned}
    & \chi^{(0)}(v) = \chi(v), \\
& \chi^{(l+1)}(v) = (\chi^{(l-1)}(v), \{\!\!\{ ( \chi^{(i-1)}(w) ); w \in N_G(v) \}\!\!\} ) 
\end{aligned}
\end{align}
where $\{\!\!\{ ( \chi^{(i-1)}(w) ); w \in N_G(v) \}\!\!\}$ denotes a multiset. Values of $\chi^{(l+1)}$ are then sorted lexicographically. Fig.\ \ref{fig:WL} shows an illustration of the 1-WL algorithm. Definition of higher-order WL tests can be found in \cite{Babai1979,Immerman1990, GNN-WL2020}. Even though almost all graphs can be separated by color refinement \cite{Babai1980,Kucera1987}, there exist counterexamples even among small molecular graphs like those in Fig.\ref{fig:WL}.  Graph representation learning algorithms that scale subquadratically with node number, like MPNN \cite{gilmer2017neural}, are at most as expressive as the 1-WL test \cite{MPNN-WL2018,GNN-WL2020,TokenGT2022}. 

\begin{figure}[htbp]
\centering
\includegraphics[width=0.9\textwidth]{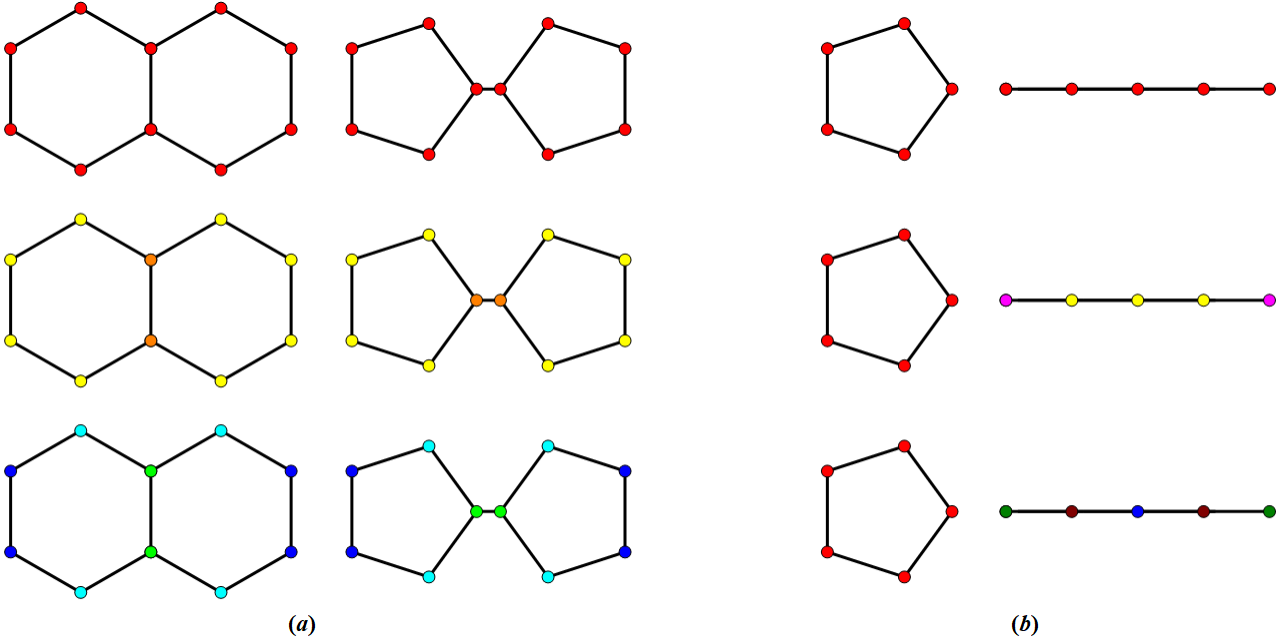}
\caption{Demonstration of the 1-WL algorithm (color refinement) using Eq. \eqref{eq:WL}. (a) Comparison of two interconnected six-member rings with two interconnected five-member rings sharing a single node. (b) Comparison of a five-member ring with a line of five nodes.  While the 1-WL algorithm can differentiate the structures in (b), it fails to distinguish those in (a), as indicated by the node colors in the final row.}
\label{fig:WL}
\end{figure}

\begin{theorem}
	SubFormer is more expressive than 1-WL in testing non-isomorphic graphs. 
\end{theorem}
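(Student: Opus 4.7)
The goal is a strict inclusion, so the argument naturally splits into two parts: (i) SubFormer can distinguish every pair of graphs that 1-WL distinguishes, and (ii) SubFormer distinguishes at least one pair that 1-WL cannot. Combining both gives the strict inequality.

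For (i) I would exploit the fact that SubFormer already contains a full MPNN sub-component: the update $X^{(l)'}=\mathrm{MPNN}(X^{(l)})$ is a standard message-passing step on the original graph. Choose the coupling weights $\theta_1,\theta_2$ so that the cluster branch is inactive, and read out $x_{\text{out}}$ as the sum over node features. The architecture then collapses to an MPNN with sum pooling. By the GIN construction of Xu et al.\ \cite{xu2018powerful}, with injective neighborhood aggregation this is at least as expressive as 1-WL, so any pair $G,G'$ separable by 1-WL admits an instantiation of SubFormer with $x_{\text{out}}(G)\neq x_{\text{out}}(G')$. Since SubFormer's readout concatenates $x_{\text{out}}$ with the class token, the full output also distinguishes $G$ and $G'$.

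For (ii) I would use the concrete pair highlighted in Fig.\ \ref{fig:WL}(a): two interconnected six-member rings versus two interconnected five-member rings sharing a single vertex. Color refinement stabilizes immediately on both graphs (every non-shared atom has degree two, the shared atom has degree four, and neighborhoods look identical on both sides), so 1-WL produces the same histogram of refined colors. Their junction trees, however, differ by construction: the first yields two cluster nodes, each representing a $6$-cycle; the second yields two cluster nodes, each representing a $5$-cycle. Because the pooling map $S X W_2$ aggregates six atom embeddings into a token in the first case and only five in the second, the initial transformer tokens $Z^{(0)}$ are already distinguishable, and a single transformer layer with a class-token readout suffices to produce different $z_{\text{out}}$.

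The principal obstacle is making (ii) fully rigorous: one must exhibit an explicit parameter choice under which the pool, skip connection, positional encoding $b^{\text{PE}}_i$, and attention layer together produce a class-token output that is provably different on the two graphs. This amounts to invoking injectivity of sum pooling on multisets of fixed-size vectors whose multiplicities differ (five versus six), which is routine but must be threaded through the residual structure $X^{(l+1)} = X^{(l)'}+\theta_1\sigma(S^T Z^{(l)}W_1^{(l)})$ and the layer normalizations inside the transformer. Once (i) is combined with a single witness from (ii), the strict expressivity inequality follows.
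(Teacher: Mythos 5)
Your proposal is correct and follows essentially the same route as the paper: the lower bound comes from SubFormer containing an MPNN/transformer component that is at least 1-WL-expressive, and strictness comes from the witness pair of Fig.~\ref{fig:WL}(a), whose junction trees differ even though 1-WL cannot separate the original graphs. The only real difference is that the paper closes the second step by invoking the general fact that any two non-isomorphic trees are already separated by color refinement \cite{Immerman1990,Fractional1994} (so SubFormer separates $G_1,G_2$ whenever $T_1\not\cong T_2$), whereas you argue the specific example constructively via the 5-versus-6 pooling counts; the general lemma would also discharge the LayerNorm/positional-encoding bookkeeping you flag as the remaining obstacle.
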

\begin{proof}
      SubFormer is a hybrid of a MPNN and a transformer. A MPNN has an expressivity that is equivalent to the 1-WL algorithm \cite{MPNN-WL2018,GNN-WL2020}, while a standard transformer with appropriate positional encoding and augmentation on tokens is at least as expressive as the 1-WL test \cite{TokenGT2022}.  The 1-WL test thus represents a lower bound on expressivity.  
      
      Consider non-isomorphic graphs $G_1$ and $G_2$ with trees $T_1$ and $T_2$ obtained from a graph decomposition scheme. Any two non-isomorphic trees are distinguishable under color refinement, or equivalently the 1-WL test \cite{Immerman1990,Fractional1994}. SubFormer can thus distinguish $G_1$ and $G_2$ if $T_1$ and $T_2$ are non-isomorphic. Using the procedure to craft junction trees, Fig.\ \ref{fig:WL} $(a)$ illustrates one case which can be separated by SubFormer but not by the 1-WL algorithm.  
\end{proof}

Existing neural network architectures, like order-$k$ linear GNNs \cite{IGN2018} or order-$k$ Folklore GNNs \cite{F-IGN2019}, achieve higher expressive power using higher-order tensors to encode graph features. We instead choose to compress graphs into local clusters and utilize a transformer to distinguish junction trees, which avoids introducing more complicated tensor inputs. As a result, the separation power of our framework is limited by the method used to compress local structures of the original graph and we leave it to the future to further refine the decomposition strategy.

\section{Datasets}

Table \ref{tab:dss} provides an overview of the benchmark datasets used in this study, with a primary focus on the ZINC \cite{irwin2012zinc} and MoleculeNet \cite{wu2018moleculenet} datasets. For consistency, we employed a data division ratio of 8:1:1 for training, validation, and testing, as recommended by \cite{wu2018moleculenet}. Notable exceptions are the ZINC, MOLHIV, and Peptides-struct datasets, which have predefined split specification. These predefined ratios were sourced from their implementations in the PyTorch Geometric package \cite{Fey/Lenssen/2019} and the Open Graph Benchmark package \cite{hu2020open}. Additionally, we depend on these packages for loss computation and to access standard node and edge features.

\begin{table}[ht]
\centering
\caption{Benchmark datasets used in the study. \# Excluded is the number of samples that was excluded due to failure of the default tree decomposition scheme. }
\label{tab:dss}
\begin{tabular}{ccccccc}
	\toprule
	\textbf{Dataset}     & \textbf{\# Samples}  & \textbf{\# Tasks}    & \textbf{Task type}   & \textbf{Metric}      & \textbf{Split}       & \textbf{\# Excluded}  \\
	\midrule
	ZINC                 & 12,000                & 1                    & Regression           & MAE                  & Random               & 0                    \\
    Peptides-Struct & 15,535 & 11 & Regression & MAE & Stratified & 0 \\
	TOX21                & 7,831                 & 12                   & Classification       & ROC-AUC              & Random               & 19                   \\
	TOXCAST              & 8,575                & 617                  & Classification       & ROC-AUC              & Random               & 11                   \\
	MUV                  & 93,087                & 17                   & Classification       & AP                   & Random               & 0                    \\
	MOLHIV                  & 41,127                & 1                    & Classification       & ROC-AUC              & Scaffold             & 0                    \\
	\bottomrule
	\multicolumn{1}{l}{} & \multicolumn{1}{l}{} & \multicolumn{1}{l}{} & \multicolumn{1}{l}{} & \multicolumn{1}{l}{} & \multicolumn{1}{l}{} & \multicolumn{1}{l}{} \\
	\multicolumn{1}{l}{} & \multicolumn{1}{l}{} & \multicolumn{1}{l}{} & \multicolumn{1}{l}{} & \multicolumn{1}{l}{} & \multicolumn{1}{l}{} & \multicolumn{1}{l}{} \\
	\multicolumn{1}{l}{} & \multicolumn{1}{l}{} & \multicolumn{1}{l}{} & \multicolumn{1}{l}{} & \multicolumn{1}{l}{} & \multicolumn{1}{l}{} & \multicolumn{1}{l}{}
\end{tabular}
\end{table}

\section{Hyperparameters}

The hyperparameters applied to all benchmark datasets in this study are detailed in Table \ref{tab:hps}. We did not perform a systematic search of hyperparameters due to limited computational resources. Notably, while the CosineAnnealingLR scheduler is frequently associated with the ZINC dataset, it was not employed in this research. Similarly, the GeLU activation function \cite{hendrycks2016gaussian}, often utilized for various transformers, was not incorporated. The local MP block, in its design, has the flexibility to integrate diverse MPNNs; however, this work specifically considered GINE \cite{xu2018powerful} and the anti-symmetric variant \cite{gravina2022anti} of GATv2 \cite{brody2021attentive}. Tuning of hyperparameters could enhance performance. 

\begin{table}
\centering
\small
\caption{Hyperparameter settings used for the benchmarks. The dimension of the hidden features of the readout block depends on whether the dual readout is enabled. ROP stands for the ReduceLROnPlateau scheduler, LPE stands for the graph laplacian matrix eigenvectors, and SPDE stands for the shortest path distance matrix eigenvectors.\\ }
\label{tab:hps}
\rotatebox{90}{%
\begin{tabular}{cccccccc}

	\toprule
	\textbf{Model Comp.} & \textbf{Parameters} & \textbf{ZINC} & \textbf{TOX21} & \textbf{TOXCAST} & \textbf{MUV} & \textbf{MOLHIV} & \textbf{Peptides-struct} \\
	\midrule
	Optimization & Epoch                  & 500               & 50        & 100       & 20        & 30        & 100        \\
	& Learning rate          & 0.001             & 0.0001    & 0.0001    & 0.0001    & 0.0001    & 0.0005        \\
	& Optimizer              & Adam              & AdamW     & AdamW     & AdamW     & Adam      & AdamW        \\
	& Schduler               & ROP & None      & None      & None      & ROP &ROP  \\
	& Batch size             & 64                & 32        & 64        & 32        & 32   & 64             \\
	\midrule
	Local MP     & \# Layers              & 2                 & 3         & 8         & 5         & 3        & 2         \\
	& \# Hidden Features     & 64                & 256       & 128       & 128       & 64       & 64         \\
	& MP Type                & GINE              & GINE    & aGATv2    & aGATv2    & GINE       & GINE       \\
	& Aggregation            & Sum               & Sum       & Sum       & Sum       & Mean     & sum         \\
	& Activation             & ReLU              & ReLU      & ReLU      & ReLU      & ReLU     & ReLU         \\
	& Tree Activation        & LeakyReLU         & LeakyReLU & LeakyReLU & LeakyReLU & LeakyReLU   & None      \\
	& Dropout                & 0                 & 0.2         & 0         & 0         & 0.05      &0.05        \\
	& Edge Dropout           & 0                 & 0.2       & 0.5       & 0.5       & 0           &0.05      \\
	\midrule
	Pos. Enc.    & Encoder PE Emb. Dim.   & 64                & 256       & 128       & 128       & 64          &64      \\
	& PE Dim.                & 10                & 10        & 10        & 10        & 10          &32      \\
	& PE Type                & DEG,SPDE          & DEG,SPDE   & DEG,LPE   & DEG,LPE   & DEG,SPDE   &DEG,LPE       \\
	& PE Merge               & Concat            & Concat       & Sum       & Sum       & Concat   &Concat         \\
	& MP PE                  & None              & LPE      & None      & None      & LPE          & None     \\
	\midrule
	Transformer  & \# Hidden Features     & 128               & 512       & 128       & 128       & 128       &128        \\
	& \# FFN Hidden Features & 128               & 1024       & 512       & 512       & 256      &128         \\
	& \# Layers              & 3                 & 4         & 4         & 4         & 5         &3        \\
	& \# Heads               & 8                 & 8        & 16        & 16        & 8          &8       \\
	& Activation             & ReLU              & ReLU      & ReLU      & ReLU      & ReLU      &ReLU        \\
	& Dropout                & 0.1               & 0.5       & 0.2       & 0.2       & 0.5       &0.05        \\
	& Padding Dim.           & 40                & 120       & 120       & 50        & 260       &530        \\
	\midrule
	Readout      & \# Hidden Features     & 192/128           & 768/512   & 256/128   & 256/128   & 256/128   & 128        \\
	& Activation             & ReLU              & ReLU      & ReLU      & ReLU      & ReLU    & ReLU  \\
	\bottomrule
\end{tabular}%
}
\end{table}
\clearpage
\section{Timing}

We report the computional time per epoch and the total number of epochs required for each dataset in Table \ref{tab:timing}. All training was done on a single Nvidia RTX 3080 card with single-precision float format. Automatic mixed precision (AMP) could accelerate the training over 50\%. The numbers reported are without AMP at torch.float32 format except for the Peptides-struct benchmark. %The self-attention is the standard scaled-dot multi-head self-attention as described in the literature \cite{Attention2017}. 
Theoretically, SubFormer could be further accelerated by incorporating other common techniques such as PerFormer \cite{choromanski2020rethinking}, Linear Transformer \cite{katharopoulos2020transformers}, etc.

\begin{table}[ht]
\centering
\small
\caption{Training time per iteration and total number of iterations required to complete training. }
\label{tab:timing}
\begin{tabular}{@{}lll@{}}
	\toprule
	\textbf{Dataset} & \textbf{Time per epoch(sec)} & \textbf{Num. of epochs}   \\ 
	\midrule
	ZINC    & 3              & 500              \\
	TOX21   & 4              & 40              \\
	TOXCAST & 5              & 100              \\ 
	MUV     & 36             & 20              \\
	MOLHIV     & 45             & 30               \\
	Peptides-struct & 9 & 100 \\
	\bottomrule
\end{tabular}
\end{table}
 
\section{Additional analysis}

\begin{figure}[ht]
\centering
\includegraphics[width=0.5\textwidth]{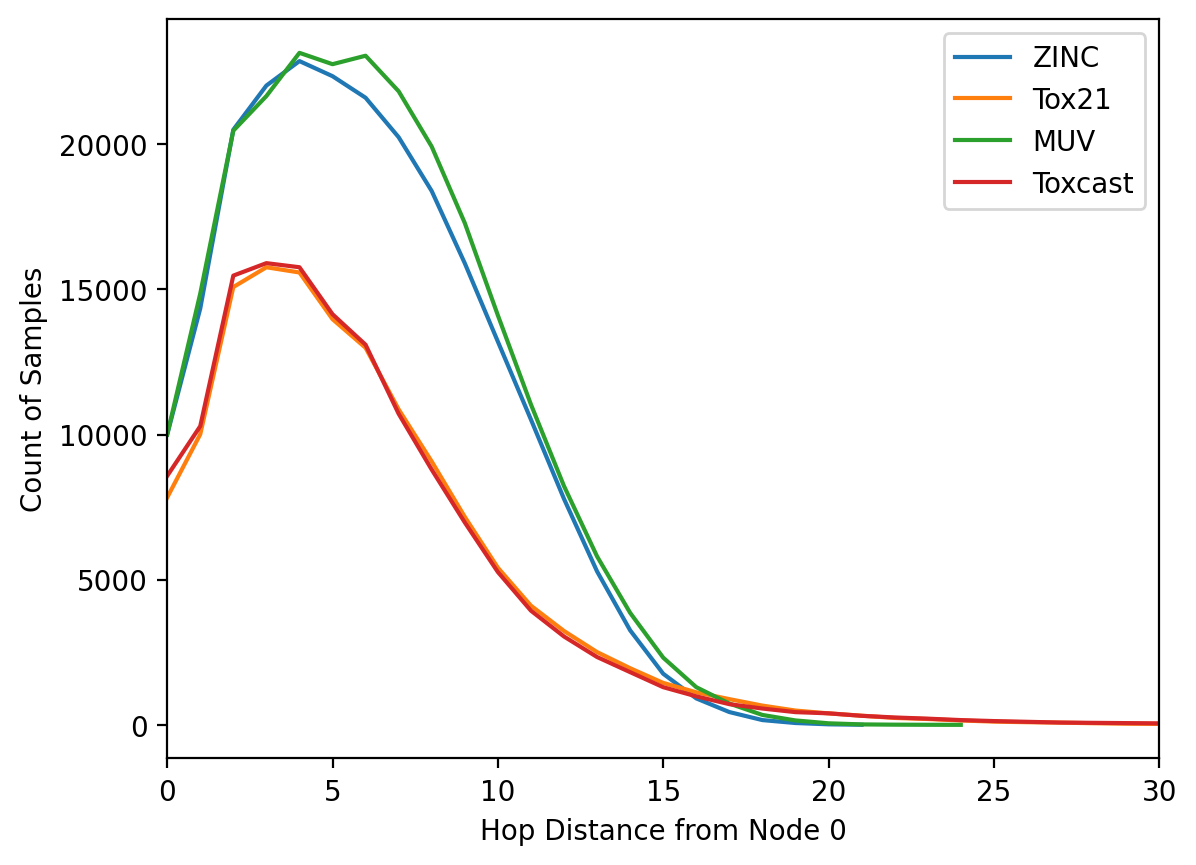}
\caption{Distribution of hopping distances from a reference node (the starting node selected by the force-directed graph drawing algorithm as implemented in the NetworkX package \cite{SciPyProceedings_11}) to all other nodes in the indicated datasets.}
\label{fig:problem radii}
\end{figure}

\begin{figure}[ht]
    \centering
    \includegraphics[width=1\textwidth]{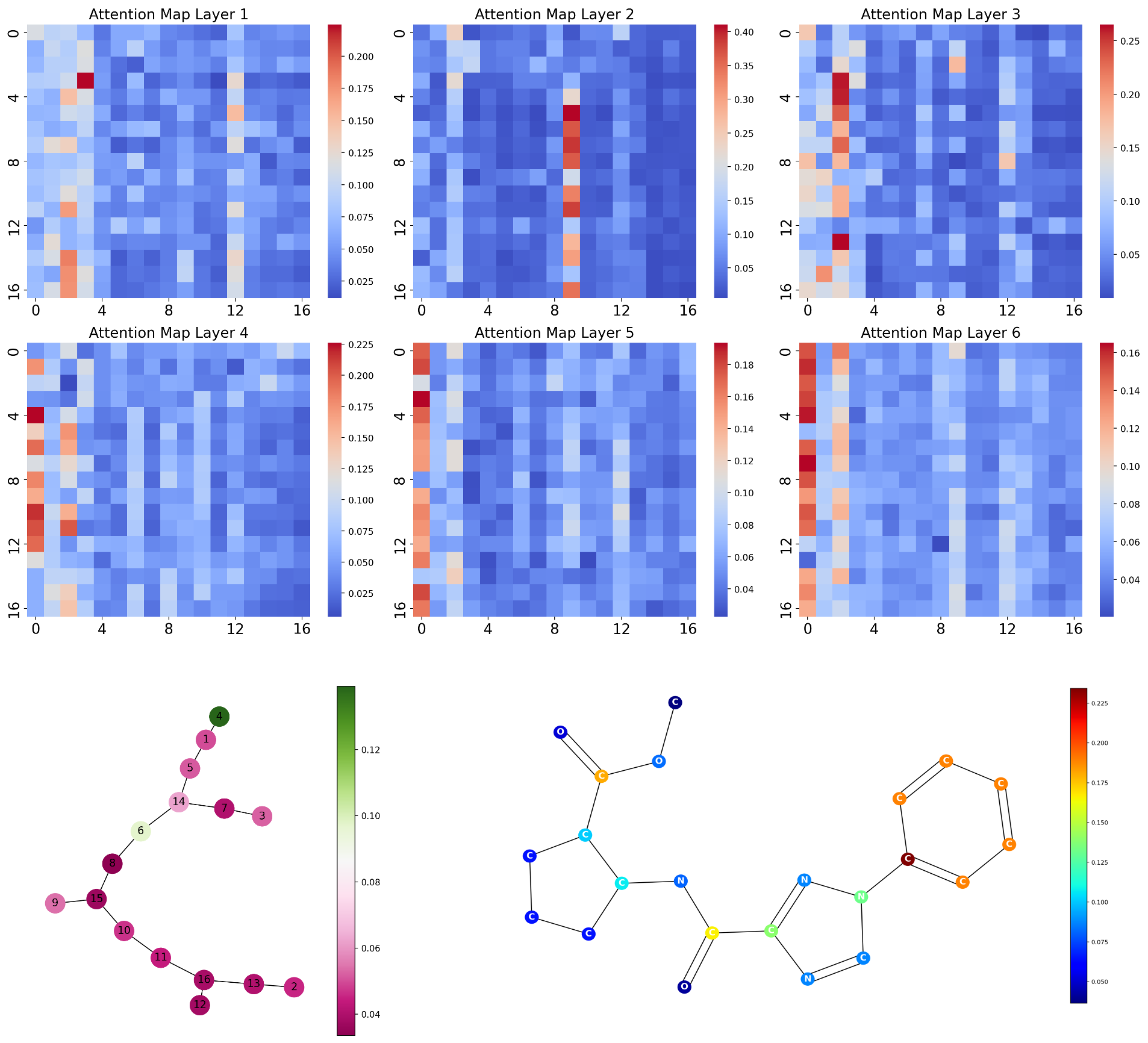}
    \caption{Same as Fig.\ \ref{fig:attn_mols} for an additional molecule from the ZINC dataset}
    \label{fig:attn_mol_2}
\end{figure}

\begin{figure}[ht]
    \centering
    \includegraphics{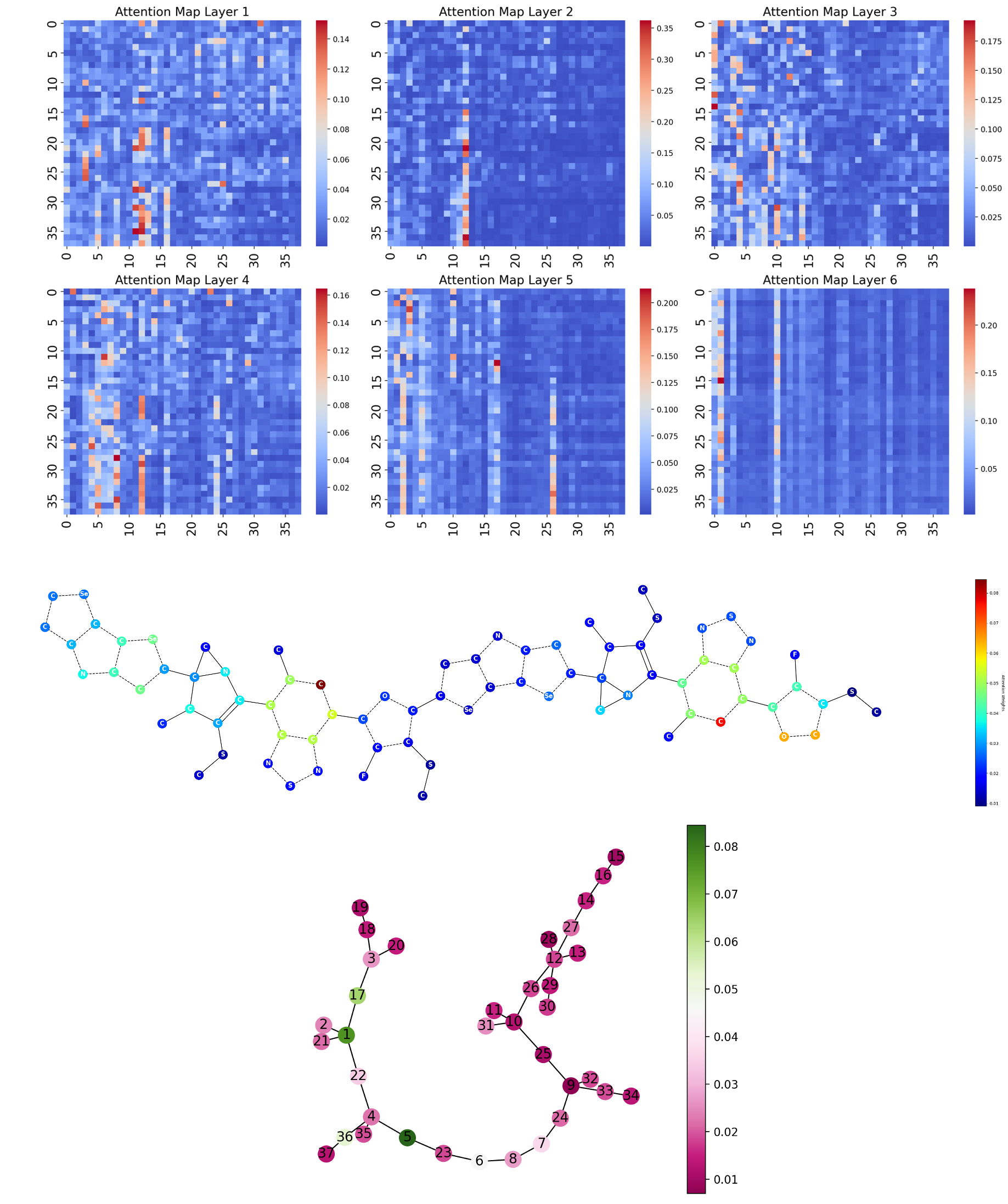}
    \caption{Same as Fig.\ \ref{fig:attn_mols} for an additional long-chain molecule from the organic donor-acceptor dataset}
    \label{fig:attn_mol_org_2}
\end{figure}
\newpage

\end{document}